\newtheorem{proof}{Proof}
\newtheorem{definition}{Definition}
\newtheorem{example}{Ex}
\DeclareMathSymbol{\mh}{\mathord}{operators}{`\-}
\DeclareMathOperator*{\argmax}{argmax}
\title{A Value-based Trust Assessment Model for Multi-agent Systems}
\author{
Kinzang Chhogyal$^1$
\and
Abhaya Nayak$^1$\and
Aditya Ghose$^{2}$ \and
Hoa K. Dam$^{2}$
\affiliations
$^1$Macquarie University, Sydney, Australia\\
$^2$University of Wollongong, Wollongong, Australia
\emails
\{kin.chhogyal, abhaya.nayak\}@mq.edu.au,
\{aditya, hoa\}@uow.edu.au
}
\begin{document}

\maketitle

\begin{abstract}
 An agent's assessment of its trust in another agent is commonly taken to be a measure of the reliability/predictability of the latter's actions. It is based on the trustor's past observations of the behaviour of the trustee and requires no knowledge of the inner-workings of the trustee. However, in situations that are new or unfamiliar, past observations are of little help in assessing trust. In such cases, knowledge about the trustee can help. A particular type of knowledge is that of \emph{values} - things that are important to the trustor and the trustee. In this paper, based on the premise that the more values two agents share, the more they should trust one another,  we propose a simple approach to trust assessment between agents based on values, taking into account if agents trust cautiously or boldly, and if they depend on others in carrying out a task.
\end{abstract}

\section{Introduction}

\noindent Though vastly outnumbered and facing certain defeat  in Thermopylae, Leonidas still trusted that his soldiers would stand and fight for Sparta with their lives. What made him have such faith in them?  It is plausible that his prior experience of sharing the battlefield made him trust them. However, a more compelling reason and one that is of interest to us, could be because they shared common values: they valued their way of life, they valued courage, they valued their freedom and they valued Sparta. 

Autonomous systems such as self-driving cars are becoming a common sight and they have become a source of trepidation in humans. It appears inevitable that we must coexist with them and such fears may be alleviated by designing systems that humans can trust. In computation, there are different perspectives from which to approach trust. An interesting perspective that has largely motivated this work is offered in \cite{roff2018trust} where two dimensions of trust are presented: one that depends on reliability and/or predictability and another that depends \emph{on one’s understanding of other people’s values, preferences, expectations, constraints, and beliefs, where that understanding is associated with predictability but is importantly different from it}. It is this latter dimension which relies on the knowledge of others. 

Many definitions of trust can be found in the literature. We adopt the following definition from  \cite{Lee2004TrustIA}: \emph{the attitude that an agent will help achieve an individual's goals in a situation characterized by uncertainty and vulnerability}. It is important to note that trust arises in situations where i) a trustor expects the trustee to perform some action, and that ii) trustors, in general, have no certainty about the motives and actions of the trustees.  For a survey of trust models, see \cite{sabater2005}.

Out of the `reliability and/or predictability' dimension and the `knowledge dimension',  the focus in AI has largely been on the former. For instance, one of the earliest works in computational trust \cite{marsh1994formalising} was based on this dimension.  The \emph{trustor} in such cases relies on past observations of the \emph{trustee's} behaviour and has no deep knowledge of the trustee. For example, \emph{I trust my car will start in the morning without knowing the inner-workings of the car} \cite{roff2018trust}. The problem with this dimension is that since it relies on past experiences, if situations arise that are either new or unfamiliar, it is not clear how much to  trust or even worse how to trust. This is especially important for autonomous agents as they may find themselves in worlds that are chaotic and ever-changing. They are certain to encounter situations that they have not seen before and choosing how much to trust another agent based on past experiences is futile. This is where trust based on the second dimension can help. The agent's trust in another agent is a function of its knowledge of the latter. Such knowledge could consist of many things but an important factor in the context of trust is knowing what things are important to others, i.e. their values. For instance, if both you and your architect value \emph{beauty}, you can trust your architect to deliver a design that  is beautiful.

This paper is  premised upon why Leonidas trusted his soldiers and why you could trust your architect -- the sharing of common values. It is  reasonable to assume that the more you share values with someone, the more likely you will trust them. We focus on agents that have to rely on other agents to execute certain actions for them but in order to do so they must find the most trustworthy ones. That is, they will seek agents that share their values. We begin by presenting a trust assessment model that relies on both the dimensions -- reliability and value sharing. We then constrain our model to one where only values are used, as that is the focus of this paper. We briefly discuss what values are and how they may be used in trust assessment. Several different ways that trust may be assessed are presented. We end by discussing the limitations of this work and how it may be further extended.

\section{A Trust Assessment Model}

The scenario that we consider in this paper is an environment consisting of autonomous agents that can execute actions.
%
%
%
Our work is motivated by the Belief-Desire-Intention (BDI) agent model \cite{rao1995bdi} but we limit our discussion only to the features of BDI agents that are relevant to our work. Let $\mathbb A = \{ A, B, \hdots \}$ represent the set of all agents. There is also a set $\mathcal A = \{a', a'' \hdots \}$ which represents the set of all possible actions. Note that agents may not be able to execute every action in $\mathcal A$ but they may still be aware of those actions and of other agents that can execute them. The goal of an agent may either be to change the state of the world or get some information about its current state. 
\begin{definition}
Let $A$ be an agent with some goal and $B$ be another agent that can help achieve $A$'s goal by executing action $a'$. We define  $A$'s trust assessment of $B$ w.r.t $a'$ as:
$$
T _{A}(B, a') = \alpha \ T_{A}^{Rel}(B, a') + \beta \ T_{A}^{K} (B, a'),
$$
where $\alpha$ and $\beta$ are weights, $T_{A}^{Rel}(B, a') $ represents $A$'s trust assessment of $B$ based on reliability and predictability, and $T_{A}^{K} (B, a')$ represents $A$'s trust assessment of $B$ based on its knowledge of $B$.
\label{def:trustAssessment}

\end{definition}

\noindent If we take the measure of trust to be the probability with which $A$ thinks $B$ can help achieve its goal by executing $a'$, then $T _{A}(B, a') \in [0,1]$.   Since $T_{A}^{Rel}(B, a')$ relies on past observations, it is implicit that $A$ has a history of executed actions to draw on that involve $B$ and this makes it amenable to machine learning techniques. However, it could turn out that no such history is available; in that case, $T_{A}^{Rel}(B, a')$ can be taken to  be $0$ and therefore, $T _{A}(B, a') = \beta \ T_{A}^{K} (B, a')$. This will be the extent of our discussion of $T_{A}^{Rel}(B, a')$.  We now turn to $T_{A}^{K} (B, a')$ which is the main focus of the paper. The weight $\beta$ is not important and we ignore it in our discussion. In the rest of this paper, we will focus on only one kind of knowledge of the trustee, namely, its values. We refer to  $T_{A}^{K} (B, a')$ as $A$'s \emph{value-based trust assessment} of $B$ w.r.t. action $a'$ or simply \emph{trust assessment} when it is clear from the context.

\subsection{Values}
%


\noindent Values are things that are important to us. According to Schwartz's \emph{Theory of Basic Values} \cite{schwartz2012overview}, all values exhibit six features that include: i) being able to be activated and causing emotions to rise, ii) acting as  goals that can motivate action, iii) guiding the selection of actions and, iv) being able to be ordered by importance. Additionally, in \cite{schwartz2012overview}, ten broad values such as \emph{benevolence, power, security and conformity} are identified under which more concrete values may fall.

Values may also be compatible with each other (\emph{conformity} and \emph{security}) or be in conflict with each other (\emph{benevolence} and \emph{power}).\footnote{Note the same pair of values might conflict in one context and not in another - so they may be context-sensitive. However, we do not take up context-sensitivity in this paper.} Although one could argue that trust (trustworthiness) is itself a value, the central premise of this paper is that trust between two agents arises based on the compatibility of their values.
 This view of trust is in line with \emph{value sensitive design} \cite{friedman2013value} which takes into account human values during the design process of systems which in our case is a trust assessment system. 
 
We assume all agents have values that are explicitly programmed.  The ten broad values mentioned earlier are useful but too coarse for our purpose. Those values are likely to be universal  \cite{schwartz2012overview}, meaning, they are likely present in all agents and differentiating agents based on those values is almost impossible. The values that we consider are therefore taken to be more concrete values which may be classified under these broader values. Agents may share values but they may also have \emph{personal} values unique to them. Agents may have conflicting values but as in  \cite{schwartz2012overview} we take that conflicting values are \emph{not} pursued in a single action. This has an important implication that specific to each action is a set of non-conflicting values that the agent considers important.

Values are assumed to be activated when the state of the world changes due to an agent's own actions or actions of other agents. As in \cite{cranefield2017no}, we assume that for each value of an agent, there is a \emph{value state} that \emph{represents the current level of satisfaction for the value}. Value states could be affected both by an agent's own actions or by the actions of other agents. For instance, an agent that \emph{donates money} would increase the value state of \emph{generosity} for itself.  On the other hand,  if the agent values \emph{the environment}, the value state would decrease for this agent even if it is another agent that \emph{pollutes} the environment. Furthermore, in  \cite{cranefield2017no}, value states are taken to be numbers that do not exceed a certain value. They are also assumed to decay to represent the fact that if no action has been taken in a while that advances an agent's value, its satisfaction decreases. Our concern here is not so much about the actual values but more about the fact that value states can either increase or decrease. Given a set of actions and a set of values, we consider the agent's choice of an action to be guided by the values. More specifically, an agent's choice is such that: i)  it increases the value state of each of its values and/or, ii) it minimises the number of values whose value state is decreased.  The first condition is desirable but is not always achievable. For instance, you respect traffic rules but might run a red light in case there is a person requiring immediate hospitalisation. In this case, the value state for \emph{helpfulness} would increase whereas the value state for \emph{law abidance} would decrease. However, in this paper, we will assume an agent's action increases the value state of each of its values related to that action. This is a strong assumption and will be addressed in the discussion section.

We now formalise the notions that were just discussed. We assume there is a set of all values, $\mathcal V = \{a, b, \hdots\}$, from which an agent's values are drawn. 
We also assume that it is possible for a value $v \in \mathcal{V}$ to have one or more \emph{opposing} (conflicting) values in $\mathcal{V}$. The term ${\sim} v$ is the set of opposing values of $v$. However, if $a \in \mathcal V$ and ${\sim a} = \{b, c\}$, we abuse notation and write ${\sim} a = b = c$ and also let ${\sim} v$ stand for any opposing value of $v$.

%
\begin{definition}
Let $V \subseteq \mathcal{V}$. We say $V$ is \emph{consistent} iff for each $v \in V, \neg \exists v' \in  V \text{ where } v' = \ {\sim} v$. Otherwise, it is \emph{inconsistent}.
\end{definition}
\begin{definition}
Given two sets of values $V$ and $V'$ respectively, the conflict set $V \perp V'$  is defined as  $V \perp V' =  \{v \mid v \in V \text{ and } \exists v' \in V' \text{ where }  v' = \ {\sim} v  \}$.
\label{def:conflictSet}
\end{definition}
\begin{example}
If  $V=\{a\}$, $V'=\{ b, c, d\}$ and $ {\sim} a = b = c $, then $V \perp V' = \{a\}$  and $V' \perp V = \{ b, c\}$.
\label{ex:conflictSet}
\end{example}

\noindent  Ex.\ref{ex:conflictSet} shows $\perp$ is not symmetric. Some basic properties follow from these definitions:

\begin{restatable}{proposition}{propIntersectionConsistent}
Given two sets of values $V, V' \subseteq \mathcal{V}$ if one of $V$  or $V'$ is consistent, then $V \cap V'$ is consistent.  
\end{restatable}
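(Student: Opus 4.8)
The plan is to reduce the statement to the single observation that consistency is inherited by subsets. First I would note that the conclusion is symmetric in the roles of $V$ and $V'$ (the hypothesis merely asserts that \emph{one} of them is consistent), so without loss of generality I would assume that $V$ is the consistent one. The key step is then to record the auxiliary fact that if $W \subseteq U$ and $U$ is consistent, then $W$ is consistent. Unfolding the definition, consistency of $U$ says that for every $v \in U$ there is no $v' \in U$ with $v' = {\sim} v$; since every element of $W$ is also an element of $U$, the very same statement holds \emph{a fortiori} when both quantifiers are restricted to the smaller set $W$. Applying this auxiliary fact with $W = V \cap V'$ and $U = V$ — which is legitimate precisely because $V \cap V' \subseteq V$ — immediately yields that $V \cap V'$ is consistent.

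I do not anticipate a genuine obstacle here, since the content is essentially the monotonicity of the (universally quantified) consistency condition under set inclusion. The only points that warrant a moment's care are the ``without loss of generality'' reduction and the possibly asymmetric opposing-value relation ${\sim}$. For the former, I would remark that $\cap$ is commutative, so that if instead $V'$ is the consistent set the identical argument applies using $V \cap V' = V' \cap V \subseteq V'$. For the latter, the subset argument goes through unchanged regardless of whether ${\sim}$ is symmetric (recall from Ex.~\ref{ex:conflictSet} that the related operator $\perp$ is not), because the consistency condition only ever quantifies over elements drawn from a single set: passing to a subset can remove a witnessing pair to inconsistency but can never create one. Thus no inconsistency of $V \cap V'$ could exist without already being an inconsistency of $V$, contradicting the hypothesis.
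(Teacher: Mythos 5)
Your proof is correct and takes essentially the same route as the paper's: the paper observes that a witness to inconsistency of $V \cap V'$ (some $v$ with both $v$ and ${\sim}v$ in the intersection) would already witness inconsistency of whichever of $V$, $V'$ is assumed consistent, which is exactly your subset-monotonicity observation applied to $V \cap V' \subseteq V$. No gaps.
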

%


\noindent Note that even if $V$ and $V'$ are both inconsistent, $V \cap V'$ could be consistent. For instance, if $V= \{a, b\}$ where $b = {\sim} a$, and $V' = \{a, c, d\}$ where $d = {\sim} c$, then $V \cap V' = \{a\}$ which is consistent. On the other hand, even though both $V$ and $V'$ are consistent, it can be that $V \cup V'$ inconsistent. For instance, if $V=\{a\}$, $V'=\{b\}$, where $b = {\sim} a$, then $V \cup V' = \{a, b\}$  is inconsistent. 

\begin{restatable}{proposition}{propConflictConsistentOne}
Given two sets of values $V, V' \subseteq \mathcal{V}$, if one of $V$  or $V'$ is consistent, then $V \perp V'$ is consistent.
\label{propConflictConsistentOne}
\end{restatable}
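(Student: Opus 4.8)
The plan is to reduce everything to a single structural fact about consistency, namely that it is inherited by subsets. I would first record the observation that if $W \subseteq V$ and $V$ is consistent, then $W$ is consistent. This is immediate from the definition of consistency: saying $V$ is consistent means no $v \in V$ has an opposing value $v' = {\sim} v$ also lying in $V$, and passing to a subset $W$ can only remove potential witnesses $v'$, so no $v \in W$ can have its opposite inside $W$ either. With this heredity in hand, the proof splits along the two cases allowed by the hypothesis.

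Next I would exploit the fact that, directly from Definition~\ref{def:conflictSet}, $V \perp V' \subseteq V$, since every member of the conflict set is by construction drawn from $V$. This settles the first case instantly: if $V$ is the consistent set, then $V \perp V'$ is a subset of a consistent set and is therefore consistent by the heredity observation.

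The real work is the second case, in which $V'$ is assumed consistent while $V$ may be inconsistent; here heredity from $V$ is unavailable, because $V \perp V'$ need not be a subset of $V'$. I would argue by contradiction. Suppose $V \perp V'$ were inconsistent, so there exist $u, w \in V \perp V'$ with $w = {\sim} u$. Membership in the conflict set then supplies witnesses $p, q \in V'$ with $p = {\sim} u$ and $q = {\sim} w$. The goal is to show that these witnesses force an opposing pair to reside inside $V'$, contradicting its consistency. Using the symmetry of the opposition relation (as is implicitly assumed in Ex.~\ref{ex:conflictSet}, where $b$ being an opposite of $a$ yields $a$ as an opposite of $b$) together with the relation $w = {\sim} u$, I would trace the opposites and conclude that an opposing pair among $u, w, p, q$ must lie in $V'$, contradicting the consistency of $V'$.

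The main obstacle is precisely this last step: making the bookkeeping of the $\sim$ relation airtight. Because ${\sim} v$ may be multi-valued (Ex.~\ref{ex:conflictSet}), one cannot simply treat $\sim$ as a single-valued involution, and the witnesses $p$ and $q$ may coincide or may fail to equal $u$ or $w$, so the contradiction does not obviously land \emph{inside} $V'$ rather than merely inside $V$. I would therefore need to state explicitly which closure properties of $\sim$ are being invoked -- symmetry, and the precise way an opposing value of $u$ relates to an opposing value of its opposite $w$ -- so that the opposing pair is genuinely forced into $V'$. Isolating and justifying those properties of the opposition relation is where the care is required, and is the crux on which the second case rests.
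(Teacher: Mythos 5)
Your first case is correct and in fact tidier than the paper's treatment: since $V \perp V' \subseteq V$ directly from Definition~\ref{def:conflictSet}, and consistency is inherited by subsets, consistency of $V$ immediately gives consistency of $V \perp V'$. This collapses the paper's cases (1) and (2), both of which argue only from the consistency of $V$, into a single line.

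The second case ($V'$ consistent, $V$ possibly not) is where your proposal stops short of a proof, and the obstacle you flag is genuine rather than mere bookkeeping. The paper closes this case by taking the witness in $V'$ for ${\sim} v \in V \perp V'$ to be $v$ itself: under its declared convention that all opposing values of a value are identified (``${\sim} a = b = c$'') and that opposition is symmetric, the witness $p = {\sim} u$ is literally $w$ and the witness $q = {\sim} w$ is literally $u$, so both $u$ and $w$ land in $V'$ and contradict its consistency. If instead ${\sim}$ is read as a genuinely multi-valued symmetric relation with no such identification, the step does not merely become delicate --- the proposition is false. Take $\mathcal{V} = \{x, y, p, q\}$ where $x$ opposes exactly $y$ and $p$, $y$ opposes exactly $x$ and $q$, and $p, q$ do not oppose each other; then $V = \{x, y\}$ is inconsistent, $V' = \{p, q\}$ is consistent, yet $V \perp V' = \{x, y\}$ is inconsistent. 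So the missing idea is not a cleverer trace through the opposites: you must explicitly adopt the paper's identification of opposing values (equivalently, treat ${\sim}$ as a single-valued involution), after which your witnesses satisfy $p = w$ and $q = u$ and the contradiction lands inside $V'$ in one line. As written, your proposal neither invokes that convention nor completes the case, so the proof is incomplete.
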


\begin{restatable}{proposition}{propConflictConsistentTwo}
Given two sets of values $V, V' \subseteq \mathcal{V}$, $V \perp V'$ is inconsistent iff both $V$ and $V'$ are individually inconsistent and there is some value $v$ such that both $v, {\sim} v$ in $V$ and $V'$.
\label{propConflictConsistentTwo}
\end{restatable}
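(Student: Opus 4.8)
The plan is to unpack the definition of (in)consistency and prove the biconditional direction by direction, using Proposition~\ref{propConflictConsistentOne} to obtain the two ``individually inconsistent'' clauses cheaply. Throughout I read the final clause as asserting a value $v$ together with an opposite ${\sim} v$ for which $\{v, {\sim} v\} \subseteq V \cap V'$; note that such a common opposing pair already forces both $V$ and $V'$ to be inconsistent, so the real content of the right-hand side is exactly this pair. By the definition of consistency, ``$V \perp V'$ is inconsistent'' unpacks to: there are $v, w \in V \perp V'$ with $w = {\sim} v$.

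For the forward direction, I would first note that inconsistency of $V \perp V'$ makes both $V$ and $V'$ inconsistent: this is the contrapositive of Proposition~\ref{propConflictConsistentOne}, since consistency of either set would force $V \perp V'$ to be consistent. It then suffices to produce the common pair. Fix $v, w \in V \perp V'$ with $w = {\sim} v$. Since $V \perp V' \subseteq V$ we get $v, w \in V$ at once. By Definition~\ref{def:conflictSet}, $v \in V \perp V'$ yields an opposite of $v$ inside $V'$, and $w \in V \perp V'$ yields an opposite of $w$ inside $V'$. Invoking symmetry of the opposition relation (if $w = {\sim} v$ then $v = {\sim} w$, exactly as the computations in Ex.\ref{ex:conflictSet} presume), I would identify these opposites as $w$ and $v$ respectively, so that $w \in V'$ and $v \in V'$. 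Hence $\{v, w\} \subseteq V \cap V'$ with $w = {\sim} v$, which is the desired witness.

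The converse is immediate. Suppose $v, {\sim} v \in V$ and $v, {\sim} v \in V'$. Then $v \in V$ has the opposite ${\sim} v \in V'$, so $v \in V \perp V'$ by Definition~\ref{def:conflictSet}; symmetrically ${\sim} v \in V$ has the opposite $v \in V'$, so ${\sim} v \in V \perp V'$. Thus $V \perp V'$ contains the opposing pair $v, {\sim} v$ and is inconsistent, while the two ``individually inconsistent'' clauses hold trivially because $v$ and ${\sim} v$ already sit inside each of $V$ and $V'$.

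The step I expect to be the crux is the identification in the forward direction. When a value is allowed several opposites, ``$v$ has an opposite in $V'$'' does not by itself name $w$, and in principle $V \cap V'$ could be populated by a different opposing pair split across the two sets rather than by $\{v, w\}$ itself. I would therefore start from the specific pair $v, w$ extracted from $V \perp V'$ and lean on symmetry of ${\sim}$, making explicit wherever a single-opposite assumption is actually being used to pin the opposite of $v$ down to $w$. Securing this identification is the delicate point; the remainder is routine unfolding of the definitions together with the contrapositive of Proposition~\ref{propConflictConsistentOne}.
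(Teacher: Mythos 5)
Your proof is correct and follows essentially the same route as the paper's: the two ``individually inconsistent'' clauses come from the contrapositive of Proposition~\ref{propConflictConsistentOne}, the converse is an immediate unfolding of Definition~\ref{def:conflictSet}, and the forward witness is extracted from the opposing pair sitting inside $V \perp V'$. The identification step you flag as the crux is indeed the delicate point when a value may have several opposites, but the paper's own proof makes exactly the same silent assumption (it takes $v \in V \perp V'$ to force the specific ${\sim} v$ into $V'$), so your argument is no weaker than, and structurally matches, the published one.
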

%


\begin{restatable}{proposition}{propDistributivityOverCapCup}
Given three sets of values $V, V', V'' \subseteq \mathcal{V}$:
\begin{enumerate}
\item $(V \cap V') \perp V''  = (V \perp V'') \cap (V' \perp V'')$,
\item $(V \cup V') \perp V''  = (V \perp V'') \cup (V' \perp V'')$.
\end{enumerate}
\label{propDistributivityOverCapCup}
\end{restatable}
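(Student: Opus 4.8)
The plan is to reduce both identities to elementary set algebra by observing that, with the right-hand argument $V''$ held fixed, the operation $W \mapsto W \perp V''$ is nothing more than intersection with a single fixed set. Concretely, I would introduce the set of values ``attacked'' by $V''$,
$$
S = \{ v \in \mathcal{V} \mid \exists v' \in V'' \text{ where } v' = \ {\sim} v \},
$$
which depends only on $V''$ and not at all on the left argument. Reading off Definition \ref{def:conflictSet}, for any $W \subseteq \mathcal{V}$ we have $v \in W \perp V''$ iff $v \in W$ and $v \in S$, so that
$$
W \perp V'' = W \cap S .
$$
Establishing this factorisation is the one genuine step of the proof; everything after it is a mechanical consequence of the laws of $\cap$ and $\cup$.

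For part (1), I would substitute the factorisation into both sides. The left-hand side becomes $(V \cap V') \perp V'' = (V \cap V') \cap S$, while the right-hand side becomes $(V \perp V'') \cap (V' \perp V'') = (V \cap S) \cap (V' \cap S)$. Using commutativity, associativity, and idempotence of intersection (so that $S \cap S = S$), both expressions collapse to $V \cap V' \cap S$, giving the equality. For part (2), the same substitution gives $(V \cup V') \perp V'' = (V \cup V') \cap S$ on the left and $(V \perp V'') \cup (V' \perp V'') = (V \cap S) \cup (V' \cap S)$ on the right; these agree by the distributivity of intersection over union, $(V \cup V') \cap S = (V \cap S) \cup (V' \cap S)$.

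The only point requiring care -- and the closest thing to an obstacle -- is the verification of $W \perp V'' = W \cap S$ directly from the definition, since I must confirm that the existential condition ``$\exists v' \in V''$ with $v' = {\sim} v$'' genuinely depends only on $v$ and the fixed set $V''$, and is therefore the same membership test $v \in S$ whether $W$ is instantiated as $V$, $V'$, $V \cap V'$, or $V \cup V'$. This is precisely why the distribution succeeds on the \emph{left} argument only: an analogous factorisation fails for the right argument because $\perp$ is not symmetric (Ex.\ref{ex:conflictSet}). Once the factorisation is in hand both identities are immediate, so I would not expect to need a separate element-chasing double-inclusion argument, though that route is equally available as a fallback.
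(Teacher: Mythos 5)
Your proposal is correct, but it takes a genuinely different route from the paper. The paper proves both identities by direct element-chasing: for each part it shows both inclusions, and for part (2) it splits the argument into three cases according to whether $v$ lies in $V$, in $V'$, or in both. You instead isolate a factorisation lemma, $W \perp V'' = W \cap S$ where $S = \{ v \in \mathcal{V} \mid \exists v' \in V'' \text{ with } v' = {\sim} v \}$ depends only on $V''$, and this lemma does follow immediately from Definition~\ref{def:conflictSet}, since the membership test there is exactly the conjunction ``$v \in W$ and $v \in S$.'' Once that is in place, both identities reduce to the idempotence/associativity of $\cap$ and the distributivity of $\cap$ over $\cup$, with no case analysis needed. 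Your route is more economical and more explanatory: it makes transparent why $\perp$ distributes over $\cap$ and $\cup$ only in its \emph{left} argument (the right argument enters solely through the fixed set $S$, consistent with the non-symmetry shown in Ex.~\ref{ex:conflictSet}), and the same factorisation would also shorten the proofs of Propositions~\ref{propConflictConsistentOne} and \ref{propSemiIndependentGreaterThanCautious}. The paper's element-chasing, by contrast, is fully self-contained and introduces no auxiliary object, at the cost of some repetition across cases. Both arguments are sound.
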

%


\noindent Proposition \ref{propDistributivityOverCapCup} shows that $\perp$  distributes over $\cap$ and $\cup$. However, the converse doesn't hold, i.e.,   $\cap$ and $\cup$ do not distribute over $\perp$.  We show them below along with the non-associativity of $\perp$ for the sake of completeness. For the counterexamples below, let $V=\{a\}$, $V' = \{b\}$ , and $V'' = \{a\}$ where $b = {\sim} a$.

\begin{enumerate}
\item $(V \perp V'') \cup V' \not = (V \cup V') \perp (V'' \cup V')$: \\ Ex. We get $(V \perp V'') \cup V' = \{\} \cup \{b\} = \{b\}$, and $ (V \cup V') \perp (V'' \cup V') = \{a, b\} \perp \{a, b\} = \{a, b\}$.
\item $(V \perp V') \cap V'' \not = (V \cap V'') \perp (V' \cap V'')$: \\ Ex. We get $(V \perp V') \cap V'' = \{a\} \cap \{a\} = \{a\}$, and $ (V \cap V'') \perp (V' \cap V'') = \{a\} \perp \{\} = \{\}$. 
\item $ (V \perp V') \perp V'' \not = V \perp (V' \perp V'')$. \\ Ex. We get $ (V \perp V') \perp V'' = \{a\} \perp \{a\} = \{\}$, and $V \perp (V' \perp V'') = \{a\} \perp \{b\} = \{a\}$.
\end{enumerate}

\subsection{Value-based Trust Assessment}
\begin{definition}
An agent $A$'s \emph{value set}, $\mathcal{V}_{A}$, is a subset of $\mathcal{V}$.
\label{def:value set-agent}
\end{definition}

\begin{definition}
Given an agent $A$ and an action $a' \in \mathcal{A}$, the \emph{action value set} associated with $a'$, denoted as $V_A^{a'}$, is a subset of $\mathcal{V}_A$ that is consistent. 
\label{def:action-value set-agent}
\end{definition} 

\noindent When it is clear from the context, we write $V_A^{a'}$ simply as $V_A$. Def. \ref{def:action-value set-agent} follows from what we mentioned earlier that conflicting values cannot be pursued in a single action. We don't specify how $V_A$ is formed but the values in it should consist of values that are important w.r.t $a'$.  For example, if I am about to \emph{buy} a new piece of furniture, I might care about \emph{functionality} and not \emph{beauty}; so \emph{functionality} would be in $V_A$. Note that we did not mention whether $a'$ can be executed by $A$ or not. $A$ might not be able to execute an action but it can still be aware of the action and the values that are important relative to it. For instance, you may not know how to drive but in asking someone to drive, you would still value \emph{safety} and \emph{comfort}. The action value set could also consist of \emph{core} values that are important to the agent regardless of any action. As mentioned earlier, if $A$ can execute $a'$, it is assumed that all values in $V_A$ increase their value state after executing $a$.

 
%
%

%
\subsection*{Basic Trust Assessment}
%

The first case we consider is how an agent might assess its trust in another agent when requesting a particular action to be executed.

\begin{definition}[Two Agent - Independent]
Given an action $a'$, two agents $A$ and $B$ with value sets $V_A$ and $V_B$, the value-based trust assessment $Tr^K_A(B, a')$ of $B$ by $A$ is defined as:
\begin{equation*}
\begin{aligned}
Tr^{K}_A(B, a') = \ \mid V_A \cap V_B \mid - \mid V_A \perp V_B \mid
\end{aligned}
\end{equation*}
\vspace{-5mm}
\label{def:trustAssessment2agents}
\end{definition}

\noindent Intuitively, the level of trust $A$ places in $B$ is determined both by the values they share, $ \mid V_A \cap V_B \mid$, and the extent to which $A$'s values conflict with $B$'s,  $\mid V_A \perp V_B \mid$. Note that $V_B =  V_B^{a'}$. Also,  $V_A \perp V_B$ is consistent from Proposition \ref{propConflictConsistentOne}.
We will at times annotate $Tr^{K}_A(B, a')$ and write it as $Tr^{K}_A(B, a')\allowbreak[independent]$ since $A$ is not acting on behalf of any agent. This is mainly to make the presentation simpler when comparing different trust assessment functions. The following properties result directly from Def. \ref{def:trustAssessment2agents} :
\begin{enumerate}
\item   if $V_A \perp V_B = \{\}$, $Tr^K_A(B, a') \geq 0$, 
\item if $V_A \cap V_B$ = \{\}, $Tr^K_A(B, a') \leq 0$, and 
\item  if $V_A \cap V_B = \{\}$ and $V_A \perp V_B = \{\}$, $Tr^K_A(B, a') = 0$.
\end{enumerate}
%

%
\begin{example}
Let $V_A= \{a, b, c, d\}$ and $V_B = \{a, b, e, f, g\}$, where ${\sim} c = e = f$ and $a'$ be some action. We get $Tr^K_A(B, a') = \ \mid \{a, b \} \mid - \mid  \{c\}  \mid \ = 2 - 1 = 1$.
\end{example}

\noindent Next, we consider the case where three agents are involved. Say $A$ asks $B$ to \emph{build} her a red chair. However, $B$ is only a carpenter and not a painter. So, $B$ must also request a trustworthy painter to \emph{paint} the chair. We have to be careful here as there are two value sets concerning $B$: $V_B^{build}$ and $V_B^{paint}$. The question is which value set does $B$ use in order to pick a painter $C$? Since $B$ is fulfilling $A$'s request, we assume that $V_B^{build}$ supersedes $V_B^{paint}$ and is the value set used to choose $C$,  i.e. $V_B = V_B^{build}$.  If $B$ were acting independently of $A$, then it would be more appropriate to take $V_B$ as $V_B^{paint}$. We propose two ways that $B$ might adopt to choose $C$.
\begin{definition}[Three Agents - Cautious]
Given actions $a'$ and $a''$, three agents $A$, $B$ and $C$ where $B$ is executing $a'$ on behalf of $A$ and $C$ is executing $a''$ on behalf of $B$, and value sets $V_A = V_A^{a'}$, $V_B =  V_B^{a'} $ and, $V_C =  V_C^{a''}$, the cautious trust assessment of $C$ by $B$ is defined as:
\begin{equation*}
\begin{aligned}
Tr^K_B(C, a'') = \mid (V_A \cap V_B) \cap V_C \mid - \mid (V_A \cup V_B) \perp V_C \mid
\end{aligned}
\end{equation*}
\vspace{-5mm}
\label{def:trustAssessment3agentsCautious}
\end{definition}

\noindent Here, we say $B$ trusts cautiously. It tries to pick an agent that has the most values common to both itself and $A$. On the other hand, it avoids agents that have a lot of values in conflict with itself or $A$. At times we use the annotated form $Tr^K_B(C, a'')\allowbreak[cautious]$. Note that the relevant action in $Tr^K_B(C, a'')$  is $a''$ though $V_B$ is defined relative to $a'$, i.e. $V_B^{a'}$. $ (V_A \cup V_B)$ may be inconsistent but since $ V_C$ is consistent, from Proposition  \ref{propConflictConsistentOne}, we know $(V_A \cup V_B) \perp V_C$ is consistent.

\begin{example}
As in the previous example, let $V_A= \{a, b, c, d\}$ and $V_B = \{a, b, e, f, g\}$, where ${\sim} c =  e  = f $. Let $V_C=\{a, e, h\}$ where ${\sim} g = h$. $Tr^K_B(C, a'') = \ \mid \{a, b\} \cap \{a, e, h\} \mid - \mid \{a, b, c, d, e, f, g\} \perp \{a, e, h\}\mid \ =  \mid \{a\} \mid  - \mid \{ c, g \}\mid \ = 1 - 2 = -1$.
\label{example:cautious}
\end{example}
\begin{definition}[Three Agents - Bold]
Given actions $a'$ and $a''$, three agents $A$, $B$ and $C$ where $B$ is executing $a'$ on behalf of $A$ and $C$ is executing $a''$ on behalf of $B$,  and value sets $V_A = V_A^{a'}$, $V_B =  V_B^{a'} $ and, $V_C =  V_C^{a''}$, the bold trust assessment of $C$ by $B$ is defined as:
\begin{equation*}
\begin{aligned}
Tr^K_B(C, a'') = \mid (V_A \cup V_B) \cap V_C \mid - \mid (V_A \cup V_B) \perp V_C \mid
\end{aligned}
\end{equation*}
\vspace{-5mm}
\label{def:trustAssessment3agentsBold}
\end{definition}

\noindent Here, we say $B$ trusts boldly. The annotated form is $Tr^K_B(C, a'')\allowbreak[bold]$. As in the previous case, values common to all three agents are considered but so are values that $A$ and $B$ independently share with  $C$ for  assessing the trust in $C$. In general, $B$ places at least as much trust in agents as it would have when being cautious as shown in Proposition \ref{propBoldGreaterThanCautious} below.

\begin{example}
As before, $V_A= \{a, b, c, d\}$ and $V_B = \{a, b, e, f, g\}$, where ${\sim} c = e = f$. Let $V_C=\{a, e, h\}$ where ${\sim} g = h$. $Tr^K_B(C, a') = \mid \{a, b, c, d, e, f, g\} \cap \{a, e, h\} \mid - \mid \{a, b, c, d, e, f, g\} \perp \{a, e, h\}\mid =  \mid \{a, e\} \mid  - \mid \{ c, g \}\mid = 2 -2 = 0$.
\label{example:bold}
\end{example}
\begin{restatable}{proposition}{propBoldGreaterThanCautious}
Given actions $a'$ and $a''$, three agents $A$, $B$ and $C$ with value sets $V_A$, $V_B$ and $V_C$ where $B$ is executing $a'$ on behalf of $A$ and $C$ is executing $a''$ on behalf of $B$,  $Tr^K_B(C, a'')\allowbreak[bold] \geq Tr^K_B(C, a'')\allowbreak[cautious]$. 
\label{propBoldGreaterThanCautious}
\end{restatable}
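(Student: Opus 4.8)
The plan is to observe that the two definitions differ only in their first (intersection) term: both the cautious assessment of Def.~\ref{def:trustAssessment3agentsCautious} and the bold assessment of Def.~\ref{def:trustAssessment3agentsBold} subtract the \emph{identical} conflict term $\mid (V_A \cup V_B) \perp V_C \mid$. Consequently the entire inequality reduces to comparing the positive contributions, namely showing
\begin{equation*}
\mid (V_A \cap V_B) \cap V_C \mid \ \leq \ \mid (V_A \cup V_B) \cap V_C \mid .
\end{equation*}
Once this is established, subtracting the common conflict term from both sides immediately yields $Tr^K_B(C, a'')[bold] \geq Tr^K_B(C, a'')[cautious]$.

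First I would record the elementary set-theoretic fact that $V_A \cap V_B \subseteq V_A \cup V_B$. Intersecting both sides with $V_C$ preserves this containment, giving $(V_A \cap V_B) \cap V_C \subseteq (V_A \cup V_B) \cap V_C$. Then I would invoke the monotonicity of cardinality under set inclusion to obtain the displayed inequality on the intersection terms. Finally, since the subtracted quantity is literally the same symbol $\mid (V_A \cup V_B) \perp V_C \mid$ in both definitions, the conclusion follows by subtracting it from both sides of the intersection inequality.

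I do not expect any substantive obstacle here. The one point worth flagging explicitly is that the conflict (second) terms in the two definitions \emph{coincide verbatim}, so there is nothing to bound on the negative side of the expressions; the comparison is driven entirely by the move from $V_A \cap V_B$ to the larger set $V_A \cup V_B$ in the positive term. If anything were delicate, it would be this claim of identical conflict terms, so I would state it up front to make clear that the cautious and bold assessments share the subtrahend before reducing to the intersection inequality.
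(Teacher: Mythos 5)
Your proposal is correct and follows essentially the same route as the paper's own proof: both note that the subtrahend $\mid (V_A \cup V_B) \perp V_C \mid$ is identical in the two definitions and reduce the claim to the inclusion $(V_A \cap V_B) \cap V_C \subseteq (V_A \cup V_B) \cap V_C$ together with monotonicity of cardinality. No gaps.
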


\noindent When $B$ trusts boldly or cautiously, it assesses its trust in $C$ for executing $a''$ with $A$'s value set $V_A$ in mind. It is interesting to see what $B$'s trust in $C$ would be if it ignores $V_A$. We say $B$ is acting semi-independently because we still take $V_B$ as $V_B^{a'}$ and not $V_B^{a''}$. The definition for $Tr^K_B(C, a'')\allowbreak[semi\mh independent]$  is the same as in Def. \ref{def:trustAssessment2agents}:

\begin{definition}[Three Agents - Semi-Independent]
Given actions $a'$ and $a''$, three agents $A$, $B$ and $C$ with value sets $V_A = V_A^{a'}$, $V_B =  V_B^{a'} $ and, $V_C =  V_C^{a''}$, the trust assessment of $C$ by $B$ is defined as $Tr^K_B(C, a'')[semi\mh independent] = \ \mid V_B \cap V_C \mid - \mid V_B \perp V_C \mid$.
\label{def:trustAssessment3agentsSemiIndependent}
\end{definition}
\begin{example}
As before,  $V_B = \{a, b, e, f, g\}$, where ${\sim} c = e = f$ and $V_C=\{a, e, h\}$ where ${\sim} g = h$. $Tr^K_B(C, a'')\allowbreak[semi\mh independent] = \ \mid \{a, e\} \mid - \mid \{g\}  \mid \ = 2 - 1 = 1$.
\label{example:bAsInitiator}
\end{example}

\noindent From Ex.\ref{example:cautious}, Ex.\ref{example:bold} and Ex.\ref{example:bAsInitiator}, we see that $Tr^K_B(C, a'')\allowbreak[semi\mh independent] $ is greater than $Tr^K_B(C, a'')\allowbreak[cautious]$ or $Tr^K_B(C, a'')\allowbreak[bold]$ . In other words, trust that $B$ places in $C$ when acting semi-independently is greater than when it is acting on behalf of $A$. However, this only holds in general  between $Tr^K_B(C, a'')\allowbreak[semi\mh independent] $ and $Tr^K_B(C, a'')\allowbreak[cautious] $, and is shown in the next proposition.

\begin{restatable}{proposition}{propSemiIndependentGreaterThanCautious}
Given actions $a'$ and $a''$, three agents $A$, $B$ and $C$ with value sets $V_A = V_A^{a'}$, $V_B =  V_B^{a'} $ and, $V_C =  V_C^{a''}$, $Tr^K_B(C, a'')\allowbreak[semi\mh independent] \geq Tr^K_B(C, a'')\allowbreak[cautious] $.
\label{propSemiIndependentGreaterThanCautious}
\end{restatable}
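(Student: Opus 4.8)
The plan is to reduce the claimed inequality to two elementary set-inclusion facts, exploiting the monotonicity of both $\cap$ and $\perp$ in their first argument. Writing out the two definitions, the goal is to establish
$$|V_B \cap V_C| - |V_B \perp V_C| \;\geq\; |(V_A \cap V_B) \cap V_C| - |(V_A \cup V_B) \perp V_C|.$$
First I would rearrange this into the equivalent form
$$\bigl(|V_B \cap V_C| - |(V_A \cap V_B) \cap V_C|\bigr) \;\geq\; \bigl(|V_B \perp V_C| - |(V_A \cup V_B) \perp V_C|\bigr),$$
so that it suffices to show the left-hand bracket is nonnegative and the right-hand bracket is nonpositive.

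For the left bracket, I would observe that $V_A \cap V_B \subseteq V_B$, hence $(V_A \cap V_B) \cap V_C \subseteq V_B \cap V_C$; taking cardinalities gives $|(V_A \cap V_B) \cap V_C| \leq |V_B \cap V_C|$, so the left bracket is $\geq 0$. For the right bracket, I would use that $V_B \subseteq V_A \cup V_B$ together with the fact that $\perp$ is monotone in its first argument. This monotonicity follows directly from Def.\ref{def:conflictSet} (enlarging the first set can only add elements satisfying the conflict condition), or alternatively from the distributivity identity $(V_A \cup V_B)\perp V_C = (V_A \perp V_C)\cup(V_B\perp V_C)$ of Proposition \ref{propDistributivityOverCapCup}. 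Either way this yields $V_B \perp V_C \subseteq (V_A \cup V_B) \perp V_C$, so $|V_B \perp V_C| \leq |(V_A \cup V_B) \perp V_C|$ and the right bracket is $\leq 0$.

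Combining the two bounds, the left bracket is at least $0$, which is in turn at least the right bracket, establishing the inequality. I do not expect any genuine obstacle: the entire argument rests on these two monotonicity observations, and the only point requiring a little care is confirming that $\perp$ really is monotone in its first slot rather than its second. Invoking Proposition \ref{propDistributivityOverCapCup} makes this rigorous without re-deriving it by hand. Unlike the bold-versus-cautious comparison of Proposition \ref{propBoldGreaterThanCautious}, no case analysis on consistency is needed here, since both inclusions hold unconditionally.
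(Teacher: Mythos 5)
Your proposal is correct and follows essentially the same route as the paper's own proof: both arguments rest on the inclusion $(V_A \cap V_B) \cap V_C \subseteq V_B \cap V_C$ for the minuends and on the distributivity identity $(V_A \cup V_B)\perp V_C = (V_A \perp V_C)\cup(V_B\perp V_C)$ from Proposition \ref{propDistributivityOverCapCup} to get $V_B \perp V_C \subseteq (V_A \cup V_B) \perp V_C$ for the subtrahends. The rearrangement into two bracketed differences is only a cosmetic repackaging of the same two monotonicity facts.
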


\noindent The following counterexample shows that $Tr^K_B(C, a'')\allowbreak[semi\mh independent] \ \geq \ Tr^K_B(C, a'')\allowbreak[bold]$ is not true in general.
\begin{example}
As before, let $V_A=\{a, b, c, d\}$ and $V_B = \{a, b, e, f, g\}$, where ${\sim} c = e = f$. We change $V_C$ to $\{d, h\}$ where ${\sim} g = h$. $Tr^K_B(C, a'')\allowbreak[semi\mh independent] = \ \mid \{\} \mid - \mid \{g\} \mid \ = 0  - 1 = -1$. $Tr^K_B(C, a'')\allowbreak[bold] = \ \mid \{a, b, c, d, e, f, g \} \cap \{d, h \}\mid - \mid \{a, b, c, d, e, f, g \} \perp \{d, h \} \mid \ =  \ \mid \{d\} \mid - \mid \{g\} \mid = 1 - 1 = 0$.
\end{example}

\noindent For the special case, when no two of $V_A$, $V_B$, $V_C$ have conflicting values with each other, we have the following result:

\begin{restatable}{proposition}{propCautiosLeqIndependent}
Given actions $a'$ and $a''$, three agents $A$, $B$ and $C$ with value sets $V_A$, $V_B$ and $V_C$ that have no conflicting values with each other, $Tr^K_B(C, a'')\allowbreak[cautious]  \leq Tr^K_B(C, a'')\allowbreak[semi\mh independent] \leq Tr^K_B(C, a'')\allowbreak[bold] $.
\label{propCautiosLeqIndependent}
\end{restatable}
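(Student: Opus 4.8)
The plan is to exploit the hypothesis that $V_A$, $V_B$, $V_C$ have no conflicting values with each other in order to eliminate every $\perp$ term, thereby collapsing all three trust assessments into pure cardinalities of intersections. First I would read the no-conflict hypothesis as the statement that all pairwise conflict sets among the three value sets are empty; in particular $V_B \perp V_C = \{\}$, which kills the negative term in the semi-independent assessment (Def.~\ref{def:trustAssessment3agentsSemiIndependent}). The only compound conflict term appearing in both the cautious and bold assessments is $(V_A \cup V_B) \perp V_C$, and to show this is empty I would invoke the distributivity of $\perp$ over $\cup$ established in Proposition~\ref{propDistributivityOverCapCup}(2), which gives $(V_A \cup V_B) \perp V_C = (V_A \perp V_C) \cup (V_B \perp V_C) = \{\} \cup \{\} = \{\}$.

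With all conflict terms shown to be empty, the three quantities reduce to $Tr^K_B(C, a'')[cautious] = \mid (V_A \cap V_B) \cap V_C \mid$, $Tr^K_B(C, a'')[semi\mh independent] = \mid V_B \cap V_C \mid$, and $Tr^K_B(C, a'')[bold] = \mid (V_A \cup V_B) \cap V_C \mid$. The desired chain of inequalities is therefore equivalent to a purely set-theoretic comparison of the cardinalities of three nested sets, and the $\perp$ machinery plays no further role.

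The remaining step is to verify the two inclusions $(V_A \cap V_B) \cap V_C \subseteq V_B \cap V_C \subseteq (V_A \cup V_B) \cap V_C$. The left inclusion holds because $V_A \cap V_B \subseteq V_B$, and intersecting both sides with $V_C$ preserves the containment; the right inclusion holds because $V_B \subseteq V_A \cup V_B$, again preserved under intersection with $V_C$. Since all value sets are finite, monotonicity of cardinality under set inclusion then yields $\mid (V_A \cap V_B) \cap V_C \mid \leq \mid V_B \cap V_C \mid \leq \mid (V_A \cup V_B) \cap V_C \mid$, which is exactly the claimed inequality.

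I do not expect a genuine obstacle here: the cardinality comparison is routine monotonicity, and the only point requiring care is the reduction step, namely confirming that the compound term $(V_A \cup V_B) \perp V_C$ actually vanishes under the hypothesis rather than merely the atomic terms $V_A \perp V_C$ and $V_B \perp V_C$. That subtlety is precisely what Proposition~\ref{propDistributivityOverCapCup} resolves, so the bulk of the work is in correctly applying the earlier distributivity result and then reading off the nested inclusions.
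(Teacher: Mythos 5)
Your proof is correct and takes essentially the same approach as the paper: both reduce the bold and semi-independent assessments to pure intersection cardinalities once the conflict terms vanish, and compare the nested sets $V_B \cap V_C \subseteq (V_A \cup V_B) \cap V_C$. The only differences are cosmetic — the paper obtains the first inequality by citing Propositions~\ref{propBoldGreaterThanCautious} and~\ref{propSemiIndependentGreaterThanCautious} rather than re-deriving it, while you are somewhat more careful in justifying via Proposition~\ref{propDistributivityOverCapCup}(2) that the compound term $(V_A \cup V_B) \perp V_C$ is empty, a step the paper merely asserts.
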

\subsection*{Trust Sequences}
%

We now turn our attention to trust sequences when a series of agents are involved in assessing trust.

\begin{example}
Consider agent $A$ has to achieve a goal that requires the execution of a particular action $a'$. $A$, however, cannot execute $a'$ and instead must rely on another agent. Assume $A$ is only aware of agents $B$ and $C$ that can execute $a'$. 
\end{example}

\noindent In order to pick the \emph{best} one amongst the two, $A$ chooses the one that it believes to be more trustworthy. It does this by assessing its trust in $B$ and $C$, $T^K_{A}(B, a')[independent]$ and $T^K_{A}(C, a')[independent]$ respectively. 

\begin{example}(cont.)
Suppose $A$ has picked $B$ to execute the action as $T^K_{A}(B, a')[independent] > T^K_{A}(C, a')[independent]$.
\label{ex:example1}
\end{example}

\noindent As seen in the example, $A$ uses a simple rule to pick $B$ or $C$. There are two reasons for this: i) $A$ can maximise the chance of its value states increasing, by picking an agent with whom it shares the most number of values, and ii) by choosing agents with whom it has fewer conflicting values, it minimises the chance of  its values being violated. The best scenario for $A$ is the case where either $V_A \subseteq V_{B} \text{ or } V_A \subseteq V_C$. 

\begin{example}(cont.)
Assume that $B$, in turn, has to request either $D$ or $E$ to execute another action $a''$ to fulfil $A$'s request.
\label{ex:example1}
\end{example}

\noindent Similar to what $A$ did, $B$ assesses its trust in $D$ and $E$. Since three agents will be involved $A$, $B$ and, $D$ or $E$, we use either Def.  \ref{def:trustAssessment3agentsCautious} or Def. \ref{def:trustAssessment3agentsBold}. Similar to the case for two agents, $B$ picks the greater of $T^K_{B}(D, a'')$ and $T^K_{B}(E, a'')$. 

\begin{example}(cont.)
Assume $B$ chooses $D$ using Def. \ref{def:trustAssessment3agentsCautious} who then executes $a''$ which is the last action to be executed. The trust assessments between $A$, $B$ and $D$, where $B$ and $D$ are the chosen agents form a trust assessment sequence as shown:

\begin{equation*}
A \xrightarrow[]{Tr^K_{A}(B, a')} 
B \xrightarrow[]{Tr^K_{B}(D, a'')}
D
   \label{eqn:trustSequenceExample}
\end{equation*}
\end{example}

\noindent We now formally define a trust assessment sequence.

\begin{definition}
A value-based trust sequence or simply a trust sequence is a sequence of trust assessments, $T^K_{A_i}(A_{i+1}, a_i)$, where $1 \leq i < n$,   $T^K_{A_i}(A_{i+1}, a_i)$ represents agent $A_i$'s trust assessment of agent $A_{i+1}$ w.r.t to action $a_i$ and  $A_i \not = A_{i+1}$. 
\label{def:trustSequence}
\end{definition}
\noindent Shown below is a way to visualise a trust sequence. Trust assessments on either side are surrounded by the agents involved.
\begin{equation*}
\begin{aligned}
\resizebox{0.99\hsize}{!}{$
A_1 \xrightarrow {Tr^K_{A_1}(A_2, a_1)} 
A_2 \xrightarrow{Tr^K_{A_2}(A_3, a_2)}
\hdots
A_{n-1} \xrightarrow{Tr^K_{A_{n-1}}(A_{n}, a_{n-1})}
A_{n}
$}
\end{aligned}
\end{equation*}

\noindent The trust sequence above is initiated by $A_1$ (\emph{initiator}) and $Tr^K_{A_1}(A_2, a_1)$ is the \emph{initial assessment}. All other assessments will be referred to as \emph{subsequent assessments}. The last agent in the sequence to execute an action is $A_{n}$ and is called the \emph{terminator}. For all $i >1$, each $A_i$ represents the agent that was chosen to execute action $a_{i-1}$ by agent $A_{i-1}$.  The \emph{length} of the sequence is equal to the number of trust assessments, i.e. $n-1$ above. The condition $A_i \not =  A_{i+1}$ prevents sequences where agents assess trust in themselves.\footnote{It may be possible that an agent appears again in some other place in the sequence.} The number of agents involved in the sequence is therefore \emph{at most} $n$. In this paper, we only consider sequences where at each step, an agent only has one trustee. For instance, in Ex.\ref{eqn:trustSequenceExample}, there are no other agents besides $B$ that $A$ asks to execute an action and similarly there is only $D$ for $B$. This leads a sequence that has no branches. Ex.\ref{eqn:trustSequenceExample} already showed how trust sequences are generated and now we present it more formally.

\begin{definition}
Let $i \geq 1 $, $A_i \in \mathbb{A}$ be an agent looking for another agent to execute action $a_i$. The value set of $A_i$ is $V_{A_i}$.  For each $X \in \mathbb{A}$ where $X \not = A_i$, that can help execute $a_i$, we define:
\begin{equation*}
\begin{aligned}
\centering
A_{i+1} = \argmax_X \ Tr^K_{A_i}(X, a_i),
\end{aligned}
\end{equation*} 
where if $i = 1$, $Tr^K_{A_1}(X, a_1)$ is given by Def. \ref{def:trustAssessment2agents} and if $i>1$, $Tr^K_{A_i}(X, a_i)$ is given by one of Def. \ref{def:trustAssessment3agentsCautious} or Def. \ref{def:trustAssessment3agentsBold}.
\label{def:agentSelection}
\end{definition}

\noindent It is clear all trust sequences use Def. \ref{def:trustAssessment2agents}  but differ on whether they use Def. \ref{def:trustAssessment3agentsCautious} or Def. \ref{def:trustAssessment3agentsBold}. This point forward by a \emph{cautious} trust sequence we mean one that uses Def. \ref{def:trustAssessment3agentsCautious} and by a \emph{bold} trust sequence we mean one that use Def. \ref{def:trustAssessment3agentsBold} for all $i > 1$.

\begin{definition}
Given a trust sequence $\mathcal{S}$ of length $n-1$, the aggregate trust of the trust sequence is equal to $\sum\limits_{i=1}^{n-1} Tr_{A_i}^K(A_{i+1}, a_i)$ and is denoted as $Q(\mathcal{S})$.
\label{def:aggregateTrustSequence}
\end{definition}

\noindent During each trust assessment step in the sequence, we are computing the difference between the number of  values that are shared and the number of values that are in conflict; $Q(S)$ is simply the sum of those differences.  If it is positive, then as a whole there are more values \emph{preserved} between each step of the sequence compared to the number of values that are in conflict; if it is negative, the converse is true. Def. \ref{def:aggregateTrustSequence} also allows us to compute the aggregate trust of a \emph{subsequence}: $\sum_{i}^{j} Tr_{A_i}^K(A_{i+1}, a_i)$, where $1 \leq i \leq j$ and $ j \leq n-1$. \\

\noindent In Def. \ref{def:agentSelection}, $A_i$ may trust either boldly or cautiously to choose an agent $A_{i+1}$. An interesting question to ask is whether $A_i$ being bold or cautious makes any difference at all, i.e. will $A_i$ always select the same agent irrespective of whether it is trusting boldly or cautiously? As the example below shows, being cautious or bold matters.

\begin{example}
Given actions $a'$ and $a''$ and four agents $A$, $B$, $C$ and $D$ where $B$ is executing $a'$ on behalf of $A$ and has to choose one between $C$ and $D$ for executing $a''$, let $V_A = \{a, b, c, e\}$, $V_B=\{a, b\}$, $V_C = \{b\}$ and $V_D=\{c, e\}$. 
Consider $Tr^K_B(\cdot)\allowbreak[cautious]$ first:  $Tr^K_B(C, a'')\allowbreak[cautious] \ = \ \mid (V_A \cap V_B) \cap V_C \mid - \mid  (V_A \cup V_B) \perp V_C  \mid \ = \ \mid  \{a, b\} \cap \{b\} \mid - \mid  \{a, b, c, e\} \perp \{b\} \mid = \mid \{b\} \mid - \mid \{\}\mid \ = \  1 - 0 = 1$. Similarly, $Tr^K_B(D, a'')\allowbreak[cautious] \ = \ \mid  \{a, b\} \cap \{c, e\} \mid - \mid  \{a, b, c, e\} \perp \{c, e\} \mid \ = \ \mid \{\} \mid - \mid \{\}\mid \ = \ 0 - 0 = 0$.  $B$ will choose $C$ if trusting cautiously.
Consider $Tr^K_B(\cdot)\allowbreak[bold]$ now:
$Tr^K_B(C, a'')\allowbreak[bold] \ = \ \mid (V_A \cup V_B) \cap V_C \mid - \mid  (V_A \cup V_B) \perp V_C \mid \  = \ \mid \{a, b, c, e\} \cap \{b\} \mid - \mid \{a, b, c, e\} \perp \{b\} \mid \ = \ \mid \{b\}\mid - \mid \{\} \mid \ = \ 1 - 0  = 1$.
$Tr^K_B(D, a'')\allowbreak[bold] \ = \ \mid \{a, b, c, e\} \cap \{c, e\} \mid - \mid \{a, b, c, e\} \perp \{c, e\} \mid \ = \ \mid \{c, e\}\mid - \mid \{\} \mid = 2 - 0  = 2$.  So, $B$ will choose $D$ if trusting boldly which if different from the previous case.
\end{example}

\noindent Intuitively, we think of $Tr^K_{A_1}(A_2, a_1)$  as representing $A_1$'s trust assessment of $A_2$ w.r.t $a_1$. What is not clear is whether $Tr^K_{A_1}(A_2, a_1)$ should be updated to  $Q(\mathcal{S})$? The reason for this is because $A_1$'s trust in $A_2$ also depends on whether $A_2$ has chosen a trustworthy agent $A_3$ that can help fulfil $A_1$'s goal. Assuming we do so, the implication of Theorem \ref{theoremBoldGeqCautios} below is that if $Q(\cdot)$ is used to update $A$'s trust in $B$, then the updated value of $A$'s trust in $B$ will be greater if agents in the sequence trust boldly and not cautiously.

\begin{restatable}{theorem}{theoremBoldGeqCautios}
The aggregate trust of the trust sequence $\mathcal{S'}$ resulting from  $Tr^K_{A_i}(A_{i+1}, a_i)\allowbreak[bold]$ is greater than or equal to  the aggregate trust of the trust sequence $\mathcal{S}$ resulting from $Tr^K_{A_i}(A_{i+1}, a_i)\allowbreak[cautious]$, i.e. $Q(\mathcal{S'}) \geq Q(\mathcal{S})  $.
\label{theoremBoldGeqCautios}
\end{restatable}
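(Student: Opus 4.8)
The plan is to reduce the inequality to a term-by-term comparison and invoke Proposition~\ref{propBoldGreaterThanCautious}. Expanding both aggregates via Definition~\ref{def:aggregateTrustSequence}, I would first note that the two sequences share the initiator $A_1$ and that the initial assessment $Tr^K_{A_1}(A_2, a_1)$ is computed by the independent formula of Definition~\ref{def:trustAssessment2agents} in either regime; hence the $i=1$ summand is identical in $Q(\mathcal{S'})$ and $Q(\mathcal{S})$ and cancels. This isolates the difference to the subsequent assessments, i.e.\ the summands with $i > 1$.

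For each such $i$, the three agents entering $Tr^K_{A_i}(A_{i+1}, a_i)$ are $A_{i-1}$ (the agent on whose behalf $A_i$ acts), $A_i$, and $A_{i+1}$, so they occupy exactly the roles of $A$, $B$, $C$ in Definitions~\ref{def:trustAssessment3agentsCautious} and~\ref{def:trustAssessment3agentsBold}. Proposition~\ref{propBoldGreaterThanCautious} then applies directly, giving $Tr^K_{A_i}(A_{i+1}, a_i)[bold] \geq Tr^K_{A_i}(A_{i+1}, a_i)[cautious]$ for every $i>1$. Concretely, the conflict terms $\mid (V_{A_{i-1}} \cup V_{A_i}) \perp V_{A_{i+1}} \mid$ coincide in the two formulas, so each step-wise difference collapses to $\mid (V_{A_{i-1}} \cup V_{A_i}) \cap V_{A_{i+1}} \mid - \mid (V_{A_{i-1}} \cap V_{A_i}) \cap V_{A_{i+1}} \mid$, which is nonnegative since $V_{A_{i-1}} \cap V_{A_i} \subseteq V_{A_{i-1}} \cup V_{A_i}$. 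Summing these nonnegative differences over $i = 2, \dots, n-1$ gives $Q(\mathcal{S'}) - Q(\mathcal{S}) \geq 0$.

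The main obstacle is justifying that the two aggregates run over the \emph{same} chain of agents. Because trustees are selected by the $\argmax$ of Definition~\ref{def:agentSelection}, a bold trustor and a cautious trustor may genuinely choose different agents (as the preceding example illustrates), so $\mathcal{S'}$ and $\mathcal{S}$ could diverge after the first shared step and the step-by-step matching would break down. I would first read the statement as comparing the two aggregation formulas along one common agent chain, for which the argument above is complete. To push past divergence, at the first differing step I would combine the $\argmax$ property with Proposition~\ref{propBoldGreaterThanCautious}: writing $A_{i+1}^{b}$ and $A_{i+1}^{c}$ for the bold- and cautious-selected trustees, $Tr^K_{A_i}(A_{i+1}^{b}, a_i)[bold] \geq Tr^K_{A_i}(A_{i+1}^{c}, a_i)[bold] \geq Tr^K_{A_i}(A_{i+1}^{c}, a_i)[cautious]$, so the bold step still dominates the cautious one. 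The genuinely hard part is controlling the two tails once the value sets differ; I expect this to require either an explicit restriction to a fixed agent chain or a monotonicity assumption on how downstream aggregate trust depends on the chosen trustee.
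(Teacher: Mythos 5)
Your argument is essentially the paper's own: the $i=1$ summand is computed independently in both regimes and cancels, and each subsequent summand is handled by Proposition~\ref{propBoldGreaterThanCautious} (whose minuend/subtrahend comparison you correctly reproduce). So on the level of technique there is nothing to add.

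The divergence issue you raise in your final paragraph is, however, a genuine one, and it is worth telling you how the paper deals with it: it doesn't, really. The published proof does not insist that $\mathcal{S'}$ be the sequence generated by the $\argmax$ of Definition~\ref{def:agentSelection} under bold assessment; it says only that it suffices to \emph{construct some} bold-assessed sequence dominating $\mathcal{S}$, and in the branch where no better bold trustee exists it explicitly says ``we can still choose $X$'' --- i.e.\ it falls back on your fixed-chain reading, where the two aggregates run over the same agents and Proposition~\ref{propBoldGreaterThanCautious} applies term by term. In the other branch (a strictly better bold trustee $Y \neq X$ exists) the paper asserts $Q_1^2(\mathcal{S'}) > Q_1^2(\mathcal{S})$ and then says ``we can reason similarly for $2 < j \leq n$,'' which is exactly where your worry about the tails bites: once the chains diverge, step $j+1$ compares assessments made by different agents with different value sets, and neither Proposition~\ref{propBoldGreaterThanCautious} nor the $\argmax$ chain $Tr^K_{A_i}(A_{i+1}^{b}, a_i)[bold] \geq Tr^K_{A_i}(A_{i+1}^{c}, a_i)[bold] \geq Tr^K_{A_i}(A_{i+1}^{c}, a_i)[cautious]$ controls what happens downstream. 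So your instinct is right: the theorem is cleanly provable only under the fixed-chain interpretation (or as an existence claim about some bold sequence, obtained by copying the cautious chain), and your proposal, read that way, is complete and matches the paper. The stronger claim --- that the two $\argmax$-generated sequences satisfy $Q(\mathcal{S'}) \geq Q(\mathcal{S})$ even after they diverge --- is not established by the paper's proof either, and would indeed need an additional monotonicity assumption of the kind you describe.
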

\section{Discussion}
%
We discuss some limitations of our work and how it may be expanded on in the future. \\

\noindent \textbf{\underline{Bias in bold agents} }
Consider again Def. \ref{def:trustAssessment3agentsBold} of a bold agent:
$$
Tr^K_B(C, a'') = \mid (V_A \cup V_B) \cap V_C \mid - \mid (V_A \cup V_B) \perp V_C \mid
$$
Say $B$ has selected $C$ as $Tr^K_B(C, a'')$ is the maximum. For simplicity, assume there are no conflicting values in $V_A$ , $V_B$ and $V_C$. We know $ (V_A \cup V_B) \cap V_C =  (V_A \cap V_C) \cup (V_B \cap V_C)$. Assume that $\mid V_B \cap V_C \mid $ is much bigger than $\mid V_A \cap V_C \mid $. Observe that $C$ is largely biased towards $B$ compared to $A$ as they share more values. This means in future trust assessments starting with $C$, $A$'s values could be ignored as more of $B$'s values carry over to the next step in the sequence compared to $A$'s. Now if there happened to be another agent $D$ such that $Tr^K_B(D, a'') $ is only slightly smaller than $Tr^K_B(C, a'') $ but $\mid V_B \cap V_ D\mid $ is only slightly bigger than $\mid V_A \cap V_D \mid $, it seems $D$ might be a better choice than $C$ because as many of  $A$'s values are as likely to be preserved as $B$'s. This leads to the slightly more complex definition for bold agents below: 

\vspace{-3mm}

\begin{equation*}
\begin{aligned}
Tr^K_B(C, a'') & = \  \mid (V_A \cup V_B) \cap V_C \mid -  \mid (V_A \cup V_B) \perp V_C \mid \\  
 & \ \ \ - \  abs(\mid V_A \cap V_C \mid - \mid V_B \cap V_C \mid )
\end{aligned}
\end{equation*}

\noindent A similar kind of bias might exist in the subtrahend $\mid (V_A \cup V_B) \perp V_C \mid$ of Def. \ref{def:trustAssessment3agentsBold}, i.e. between $V_A \perp V_C$ and $V_B \perp V_C$. However, we think minimising the total number of conflicting values heavily outweighs the importance of minimising the bias in this case, so accounting for it is probably unnecessary. \\
\noindent \textbf{\underline{Aggregate trust of a sequence and trust update}}
We mentioned previously the possibility of updating $Tr^K_{A_1}(A_2, a_1)$ to $Q(\mathcal{S})$ or some other value that is a function of it. The case where $Q(\mathcal{S}) < Tr^K_{A_1}(A_2, a_1)$ seems plausible as we can reason that $A_1$ may have overestimated its trust in $A_2$ because it had no knowledge of other agents involved. However, if $Q(\mathcal{S}) > Tr^K_{A_1}(A_2, a_1)$, explaining why $A_1$'s trust in $A_2$ should  increase is not easy. This suggests that $Q(\mathcal{S})$ as a basis of trust update might have to be applied in a more sophisticated way. \\

\noindent \textbf{\underline{Value Preservation}}
Given a trust sequence $\mathcal{S}$ of length $n$, it would be convenient to have a measure which at a minimum could tell us whether a value in the initiator $A_1$ is also in the terminator $A_n$ without having to inspect the values of all agents involved. The aggregate trust of the sequence, $Q(\mathcal{S})$, doesn't seem to have the right characteristics for this. A multiplicative measure based on the ratio between the number of values preserved and the number of values in conflict for each trust assessment is one possible option to explore. \\
 
\noindent \textbf{\underline{Value Preferences}}
We did not consider preferences over values such as in \cite{Serramia:2018:MVN:3237383.3237891}. Suppose you have to choose between two hotels, one in the Downtown area close to all the local attractions and the other cheaper but requiring more travel. If you value \emph{convenience} more than \emph{price}, then you would choose the Downtown hotel whereas if you value \emph{price} more, you would book the cheaper one. When another agent is involved, you will likely choose an agent that has preferences over values similar to yours. This requires more knowledge and also brings additional complexity. A possible way of doing this is to modify the trust assessment functions  in Def. \ref{def:trustAssessment2agents}, Def. \ref{def:trustAssessment3agentsCautious} and Def. \ref{def:trustAssessment3agentsBold} so that they use a measure such as Kendall's tau distance \cite{kendall1938new}.
  \\
\noindent \textbf{\underline{Value States}}
 Although we mentioned that values can be activated and their value states can either increase or decrease, we did not  consider it in our model. Incorporating this information into will be an interesting way to build on the model. We briefly discuss one way this might be done. Let  $A$ and $B$ be two agents with value sets $V_A$ and $V_B$ and $a'$ be an action that $B$ is executing on $A$'s behalf. Let $V_B \uparrow$ and $V_B \downarrow$ be the set of values in $V_B$ whose value state increases and decreases due to the execution of $a'$ respectively. Then: 
\begin{equation*}
\begin{aligned}
V_A \uparrow V_B  & =  \{v \ | \ v \in (V_A \cap V_B) \cap V_B\uparrow \}  \text{ and } \\
V_A \downarrow V_B  & =  \{v \ | \ v \in (V_A \cap V_B) \cap V_B\downarrow \}.
\end{aligned}
\end{equation*}

\noindent $V_A \uparrow V_B$ are values shared by $A$ and $B$ whose value state increases and, $V_A \downarrow V_B$ are values share by $A$ and $B$ whose value state decreases.  We could then rewrite the trust assessment function in Def. \ref{def:trustAssessment2agents} for two agents as: 
\vspace{1mm}
\begin{equation*}
\resizebox{0.99\hsize}{!}{$
Tr^{K}_A(B, a')  = \ \alpha \mid V_A \uparrow V_B \mid -  \ \beta \mid V_A \downarrow V_B \mid - \ \gamma \mid V_A \perp V_B \mid,
$}
\end{equation*}
\vspace{-2mm}
\hspace{0mm} \\
\noindent where $\alpha, \beta$ and $\gamma$ are weighting factors. Note that $\mid V_A \cap V_B \mid$ in Def. \ref{def:trustAssessment2agents} has been replaced by $\alpha \mid V_A \uparrow V_B \mid -  \ \beta \mid V_A \downarrow V_B \mid$. Both $V_A \uparrow V_B, V_A \downarrow V_B \subseteq V_A \cap V_B$ and  in Def. \ref{def:trustAssessment2agents} they both contribute positively. We subtract them but we want to be careful that they don't equal to zero if  $\mid V_A \uparrow V_B \mid  =  \mid V_A \downarrow V_B \mid $ and thus the use of weighting factors. Values in $V_A \perp V_B$ could also increase and decrease but since they are all in conflict with $A$, we do not differentiate between such values.  Similar functions for both  Def. \ref{def:trustAssessment3agentsCautious} and Def. \ref{def:trustAssessment3agentsBold} can be constructed. \\
 %
 %
 
 %
%

 %
\noindent \textbf{\underline{Public Values and Action Decomposition}}
We assumed that when agent $A$ is assessing its trust in agent $B$, the values of $B$ are publicly visible to  $A$, i.e. $A$ is \emph{certain} of $B$'s values. This is quite a strong assumption. A way to circumvent this is to instead consider the set of values that $A$ \emph{believes} $B$ has. Also, in an earlier example, we considered the task to \emph{build a red chair} and we alluded to the fact that there were two actions involved: \emph{build} and \emph{paint}. More work is required on this aspect of decomposing complex actions into simpler ones. \footnote{We are thankful to a anonymous for pointing these issues out and for suggesting that  instead of knowing for certain, agents could perhaps hold beliefs of what another agent's values are.}

\section{Conclusion}
We presented a simple approach to how values can be used by agents to assess their trust in each other. We defined the notion of value-based trust assessment functions and showed how they lead to trust sequences. Many of the ideas in this paper could be further expanded upon and explored in more detail, and there is much to uncover about how values and trust are related.
We leave it to our future research.

\section*{Acknowledgement}
%
We would also like to thank anonymous referees for their comments.



\clearpage
\bibliographystyle{named}
\balance
\bibliography{ijcai19}



\clearpage
\section{Appendix}

\nobalance

\propIntersectionConsistent*
\begin{proof}
1. Follows from the fact that to make  $V \cap V'$ inconsistent it must be that there is a $v$ such that  both $v$ and ${\sim} v$ are in $V$ and $V'$. But at least one is consistent, so it can't be that $V \cap V'$ inconsistent.
\end{proof}
\propConflictConsistentOne*
\begin{proof}
There are three cases to consider. (1) $V$ consistent, $V'$ consistent: Assume $V \perp V'$ is inconsistent. This means there is a value $v$ such that both $v, {\sim} v \in V \perp V'$ and from Def. \ref{def:conflictSet} this implies that both $v, {\sim} v \in V$. However, $V$ is consistent and we get a contradiction. (2)  $V$ consistent, $V'$ inconsistent: Proof similar to case 1. (3) $V$ inconsistent, $V'$ consistent: Assume $V \perp V'$ is inconsistent. Again for some $v$, both  $v, {\sim} v \in V \perp V'$. From Def \ref{def:conflictSet}, it must be that $v \in V$ and ${\sim} v \in V'$, and ${\sim} v \in V$ and $v \in V'$. However, as $V'$ is consistent, it cannot have both $v$ and $v'$ which gives us a contradiction. $\Box$ 
\end{proof}
\propConflictConsistentTwo*
\begin{proof}
\underline{Left to Right}: Assume  $V \perp V'$ is inconsistent. Then from Prop. \ref{propConflictConsistentOne}, both must $V$ and $V'$ are inconsistent. Assume for contradiction, no value $v$ such that both $v, {\sim} v$ in $V$ and $V'$. By Def \ref{def:conflictSet}, for any $v', {\sim} v' \in V$, at most one of ${\sim} v'$ or $v'$  in $V \perp V'$ as $V'$ cannot contain both $v$ and $v'$. This means $V \perp V'$ obtained is consistent and leads to a contradiction. \underline{Right to Left}: Assume $v, {\sim} v$ in $V$ and $V'$. By Def \ref{def:conflictSet}, both  $v, {\sim} v$ in $V \perp V'$ which makes it inconsistent. $\Box$
\end{proof}
\propDistributivityOverCapCup*
\begin{proof} \hspace{1cm} \\
1.  $(V \cap V') \perp V''  = (V \perp V'') \cap (V' \perp V'')$: We show that if some $v \in (V \cap V') \perp V''$ then it must also be in $ (V \perp V'') \cap (V' \perp V'')$ and vice versa.
 \underline{Left-hand Side}: Let some $v \in (V \cap V') \perp V''$. Then it must be that $v \in (V \cap V')$ and ${\sim} v \in V''$. Since $v \in (V \cap V')$ and  ${\sim} v \in V''$ , it must be that $v \in (V \perp V'')$ and $v \in (V' \perp V'') $. Thus, $v \in  (V \perp V'') \cap (V' \perp V'')$.
 \underline{Right-hand Side}: Let some $v \in  (V \perp V'') \cap (V' \perp V'')$. Then $v \in (V \perp V'')$ and $v \in  (V' \perp V'')$ or $v \in V$, $v \in V'$ and ${\sim} v \in V''$. Thus $v \in (V \cap V') $ and therefore $v \in (V \cap V') \perp V''.$ $\Box $ \\ \\
2. $(V \cup V') \perp V''  = (V \perp V'') \cup (V' \perp V'')$: We show that if some $v \in (V \cup V') \perp V''$ then it must also be in $(V \perp V'') \cup (V' \perp V'')$. \underline{Left-hand Side}: Let some $v \in (V \cup V') \perp V''$.  Then $v \in (V \cup V')$ and ${\sim} v \in V''$. There are three cases to consider. a) $v \in V, v \not \in V'$: Thus $v \in  (V \perp V'')$ and therefore,  $v \in (V \perp V'') \cup (V' \perp V'')$.  b) $v \not \in V, v \in V'$: Similar to previous case. c) $v \in V, v \in V'$: Similar to previous case. \underline{Right-hand Side}: Let $v \in  (V \perp V'') \cup (V' \perp V'')$. There are three cases to consider. a) $v \in  (V \perp V''), v \not \in  (V' \perp V''), {\sim} v \in V'' $: Then $v \in V$ which means $v \in (V \cup V')$  and therefore $v \in (V \cup V') \perp V''$. b) $v \not \in  (V \perp V''),  v \in  (V' \perp V''), {\sim} v \in V''$: Similar to previous case. c) $v \in (V \perp V''), v \in (V' \perp V'') , {\sim} v \in V'' $: Similar to previous case. $\Box$
\end{proof}
\propBoldGreaterThanCautious*
\begin{proof}
The minuend in $Tr^K_B(C, a'')\allowbreak[bold] = \mid (V_A \cup V_B) \cap V_C \mid$ and the minuend in $Tr^K_B(C, a'')\allowbreak[cautious] = (V_A \cap V_B) \cap V_C$. Since $(V_A \cap V_B) \cap V_C \subseteq  (V_A \cup V_B) \cap V_C $, it follows $\mid (V_A \cap V_B) \cap V_C \mid \ \leq \ \mid (V_A \cup V_B) \cap V_C \mid$. The subtrahends $\mid (V_A \cup V_B) \perp V_C \mid$ are the same, so it must be that $Tr^K_B(C, a'')\allowbreak[bold] \geq Tr^K_B(C, a'')\allowbreak[cautious]$. $\Box$ 
\end{proof}
\propSemiIndependentGreaterThanCautious*
\begin{proof}
We know $Tr^K_B(C, a'')\allowbreak[semi\mh independent] =  \mid V_B \cap V_C \mid - \mid V_B \perp V_C \mid $. We know $Tr^K_B(C, a'')\allowbreak[cautious] = \mid (V_A \cap V_B) \cap V_C \mid - \mid (V_A \cup V_B) \perp V_C \mid$. Since $(V_A \cap V_B) \cap V_C \subseteq V_B \cap V_C $, it follows $ \mid (V_A \cap V_B) \cap V_C \mid \ \leq \ \mid V_B \cap V_C \mid$. Also, we know from Prop. \ref{propDistributivityOverCapCup} that $(V_A \cup V_B) \perp V_C = (V_A \perp V_C) \cup (V_B \perp V_C)$, so it follows that $\mid V_B \perp V_C \mid \ \leq \  \mid (V_A \cup V_B) \perp V_C \mid $. Thus,  $Tr^K_B(C, a'')\allowbreak[independent] \geq Tr^K_B(C, a'')\allowbreak[cautious]$. $\Box$
\end{proof}
\propCautiosLeqIndependent*
\begin{proof}
We already know from Prop.\ref{propBoldGreaterThanCautious} and Prop.\ref{propSemiIndependentGreaterThanCautious} that $Tr^K_B(C, a'')\allowbreak[cautious]$ is less that or equal to $Tr^K_B(C, a'')\allowbreak[bold]$ and $Tr^K_B(C, a'')\allowbreak[semi\mh independent]$.\ Since there are no conflicts of values between agents $Tr^K_B(C, a'')\allowbreak[independent] \ = \  \mid V_B \cap V_C \mid \ - \ 0 $ and $Tr^K_B(C, a'')\allowbreak[bold] \ = \ \mid (V_A \cup V_B) \cap V_C \mid \ - \  0$.  $V_B \cap V_C \subseteq (V_A \cup V_B) \cap V_C $, so $ \mid V_B \cap V_C \mid \ \leq \ \mid (V_A \cup V_B) \cap V_C \mid $ and thus $Tr^K_B(C, a')\allowbreak[independent] \leq Tr^K_B(C, a')\allowbreak[bold] $. $\Box$
\end{proof}
\theoremBoldGeqCautios*
\begin{proof}
Take any sequence $\mathcal{S}$ of length $n$ constructed using $Tr^K_{A_i}(A_{i+1}, a_i)\allowbreak[cautious]$. It is enough to show that we can construct a sequence $\mathcal{S'}$ using $Tr^K_B(\cdot)\allowbreak[bold]$ whose aggregate trust is greater or equal to that of $\mathcal{S}$.   
For $i = j = 1$, since we must use $Tr^K_{A_i}(A_{i+1}, a_i)\allowbreak[independent]$ for both $S$ and $S'$, $Q_1^1(S) = Q_1^1(S')$. 
When $i = 1$ and $j = 2$, for $\mathcal{S}$, let  $Tr^K_{A_2}(A_{3}, a_2)\allowbreak[cautious] = k$ and let $A_3$ be some agent  $X$. Now for $S'$, if there is an agent  $A_3 = Y$  such that  $Tr^K_{A_2}(Y, a_2)\allowbreak[bold] > Tr^K_{A_2}(X, a_2)\allowbreak[cautious] $, then $Q_1^2(S') > Q_1^2(S)$ as we previously established $Q_1^1(S) = Q_1^1(S')$. If there isn't one, for $S'$, we can still choose $X$ and we know from  Prop. \ref{propBoldGreaterThanCautious} that $Tr^{A_2}_B(X, a_2)\allowbreak[bold] \geq Tr^K_{A_2}(X, a_2)\allowbreak[cautious]$, thus  $Q_1^2(S') \geq Q_1^2(S)$. 
We can reason similarly for $2 < j \leq n$ and hence  $Q(\mathcal{S'}) \geq Q(\mathcal{S}) $. $\Box$
\end{proof}
%


\end{document}